\def\Headings#1#2{\def\ps@mypagestyle{\let\@mkboth\@gobbletwo%
\def\@oddhead{\hfill {\small\sc #1} \hfill}%
\def\@oddfoot{\hfill \small\rm \thepage \hfill}%
\def\@evenhead{\hfill {\small\sc #2} \hfill}%
\def\@evenfoot{\hfill \small\rm \thepage \hfill}}%
\pagestyle{mypagestyle}}
\renewcommand\footnoterule{\kern-3\p@ \hrule \@width \textwidth \kern 2\p@}
\def\@startsiction#1#2#3#4#5#6{\if@noskipsec \leavevmode \fi
	\par \@tempskipa #4\relax
	\@afterindenttrue
	\ifdim \@tempskipa <\z@ \@tempskipa -\@tempskipa \@afterindentfalse\fi
	\if@nobreak \everypar{}\else
	\addpenalty{\@secpenalty}\addvspace{\@tempskipa}\fi \@ifstar
	{\@ssect{#3}{#4}{#5}{#6}}{\@dblarg{\@sict{#1}{#2}{#3}{#4}{#5}{#6}}}}
\def\@sict#1#2#3#4#5#6[#7]#8{\ifnum #2>\c@secnumdepth
	\def\@svsec{}\else
	\refstepcounter{#1}\edef\@svsec{\csname the#1\endcsname}\fi
	\@tempskipa #5\relax
	\ifdim \@tempskipa>\z@
	\begingroup #6\relax
	\@hangfrom{\hskip #3\relax\@svsec.\hskip 0.1em}
	{\interlinepenalty \@M #8\par}
	\endgroup
	\csname #1mark\endcsname{#7}\addcontentsline
	{toc}{#1}{\ifnum #2>\c@secnumdepth \else
	\protect\numberline{\csname the#1\endcsname}\fi
	#7}\else
	\def\@svsechd{#6\hskip #3\@svsec #8\csname #1mark\endcsname
	{#7}\addcontentsline
	{toc}{#1}{\ifnum #2>\c@secnumdepth \else
	\protect\numberline{\csname the#1\endcsname}\fi
	#7}}\fi
	\@xsect{#5}}
\def\section{\@startsiction{section}{1}{\z@}{-7.6mm}{2.5mm}{\large\bf\raggedright}}
\def\subsection{\@startsection{subsection}{2}{\z@}{-5mm}{2mm}{\normalsize\bf\raggedright}}
\def\subsubsection{\@startsection{subsubsection}{3}{\z@}{-4.6mm}{2mm}{\bf\raggedright}}
\def\paragraph{\@startsiction{paragraph}{4}{\z@}{1.5ex plus 0.5ex minus .2ex}{-1em}{\normalsize\bf}}
\def\subparagraph{\@startsiction{subparagraph}{5}{\z@}{1.5ex plus 0.5ex minus .2ex}{-1em}{\normalsize\bf}}
\Crefname{setting}{Setting}{Settings}
\renewenvironment{abstract}
{\centerline{\bf Abstract}\vspace{0.7ex}%
	\bgroup\leftskip 40pt\rightskip 40pt\small\noindent\ignorespaces}%
{\par\egroup\vskip 0.7ex}
\renewenvironment{equation*}{\begin{equation}}{\end{equation}}
\numberwithin{equation}{section}
\newtheorem{theorem}{Theorem}[section]
\newtheorem{corollary}[theorem]{Corollary}
\newtheorem{lemma}[theorem]{Lemma}
\newtheorem{proposition}[theorem]{Proposition}
\newtheorem{definition}[theorem]{Definition}
\newtheorem{setting}[theorem]{Setting}
\title{\Large{\bf{Non-convergence of stochastic gradient descent \\ in the training of deep neural networks}}\vskip 1mm}
\author{Patrick Cheridito\footnote{Department of Mathematics, ETH Zurich, patrick.cheridito@math.ethz.ch} \qquad Arnulf Jentzen\footnote{Faculty of Mathematics and Computer Science, University of M{\"u}nster, ajentzen@uni-muenster.de} \qquad Florian Rossmannek\footnote{Department of Mathematics, ETH Zurich, florian.rossmannek@math.ethz.ch}}
\date{}
\newcommand{\E}{\mathbb{E}}
\newcommand{\N}{\mathbb{N}}
\newcommand{\R}{\mathbb{R}}
\newcommand{\ssum}[2]{\mathop{\textstyle{\sum}}_{#1}^{#2}}
\newcommand{\sprod}[2]{\mathop{\textstyle{\prod}}_{#1}^{#2}}
\newcommand{\ReLU}{\rho}
\newcommand{\NNel}{\theta}
\newcommand{\depth}{D}
\newcommand{\arch}{a}
\newcommand{\NNset}{A_d}
\newcommand{\NNaff}[3]{\mathcal{A}_{#1,#2}^{#3}}
\newcommand{\woNNparam}{\mathcal{P}}
\newcommand{\NNparam}{\woNNparam(\arch)}
\newcommand{\NNparaM}[1]{\woNNparam(\arch^{#1})}
\newcommand{\NNfct}[2]{\mathcal{R}_{#1}^{#2}}
\newcommand{\clip}{\mathfrak{c}}
\newcommand{\maxwidth}{W}
\newcommand{\Psp}{\Omega}
\newcommand{\Pel}{\omega}
\newcommand{\Psa}{\mathcal{F}}
\newcommand{\Pm}{\mathbb{P}}
\newcommand{\dataX}[3]{X_{#3}^{#1,#2}}
\newcommand{\dataY}[3]{Y_{#3}^{#1,#2}}
\newcommand{\datX}{\dataX{0}{0}{0}}
\newcommand{\datY}{\dataY{0}{0}{0}}
\newcommand{\Bayes}{\mathcal{B}}
\newcommand{\expB}{\mathcal{E}}
\newcommand{\Loss}[3]{\mathcal{L}_{#3}^{#1,#2}}
\newcommand{\LossDat}[1]{\Loss{0}{0}{#1}}
\newcommand{\fctLoss}[3]{L_{#3}^{#1,#2}}
\newcommand{\trueLoss}[1]{\mathfrak{L}^{#1}}
\newcommand{\finalLoss}{\mathcal{V}}
\newcommand{\SGDstepsize}{\gamma}
\newcommand{\SGDstep}[3]{\Theta_{#3}^{#1,#2}}
\newcommand{\SGDvstep}[2]{\Theta_{#2}^{#1}}
\newcommand{\SGDstepM}[4]{\Theta_{#3,#4}^{#1,#2}}
\newcommand{\SGDgrad}[3]{\mathcal{G}_{#3}^{#1,#2}}
\newcommand{\SGDmin}[1]{\mathbf{k}_{#1}}
\newcommand{\InAct}{\mathrm{I}_{\arch}}
\newcommand{\InActL}[1]{\mathfrak{I}_{\arch,#1}}
\newcommand{\brak}[1]{\left(#1\right)}
\newcommand{\abs}[1]{\left|#1\right|}
\begin{document}

\maketitle

\begin{abstract}
	Deep neural networks have successfully been trained in various application areas with stochastic gradient descent.
However, there exists no rigorous mathematical explanation why this works so well.
The training of neural networks with stochastic gradient descent has four different discretization parameters:
(i) the network architecture;
(ii) the amount of training data;
(iii) the number of gradient steps;
and (iv) the number of randomly initialized gradient trajectories.
While it can be shown that the approximation error converges to zero if all four parameters are sent to infinity in the right order, we demonstrate in this paper that stochastic gradient descent fails to converge for ReLU networks if their depth is much larger than their width and the number of random initializations does not increase to infinity fast enough.

\vskip 1mm \noindent{\bf Keywords:} Machine learning, deep neural networks, stochastic gradient descend, empirical risk minimization, non-convergence
\end{abstract}


\section{Introduction}

Deep learning has produced remarkable results in different practical applications such as image classification, speech recognition, machine translation, and game intelligence.
In this paper, we analyze it in the context of a supervised learning task, though it has also successfully been applied in unsupervised learning and reinforcement learning.
Deep learning is usually implemented with a stochastic gradient descent (SGD) method based on training data.
Gradient descent methods have long been known to work, even with good rates, for convex problems; see, e.g., \cite{BotCurNoc2018}.
However, the training of a deep neural network (DNN) is a non-convex problem, and questions about guarantees and convergence rates of SGD in this context are currently among the most important research topics in the mathematical theory of machine learning.

To obtain optimal approximation results with a DNN, several hyper-parameters have to be fine-tuned.
First, the architecture of the network determines what type of functions can be approximated.
To be able to efficiently approximate complicated functions, it needs to be sufficiently wide and deep.
Secondly, the goal is to approximate the target function with respect to the true risk, but the algorithm only has access to the empirical risk. The gap between the two goes to zero as the amount of training data increases to infinity.
Thirdly, the gradient method attempts to minimize the empirical risk, and the chance of finding a good approximate minimum increases with the number of gradient steps.
Finally, since a single gradient trajectory may not yield good results, it is common to run several of them with different random initializations.
\cite{BeckJenKuck2019,JenWelti2020} have shown that general networks converge if their size, the amount of training data, and the number of random initializations are increased to infinity in the correct way, albeit with an extremely slow speed of convergence.
In general, one cannot hope to overcome the slow speed of converge; see \cite{Shamir2019}.
On the other hand, it has been shown that, for the training error, faster convergence can be guaranteed with certain probabilities 
if over-parametrized networks are used; see
\cite{ChizatOyallonBach2019,DuZhaiPocSin2019,
EMaWu2020,ZouCaoZhouGu2020,SolJavLee2019} and the references therein.
A different approach to the convergence problem relies on landscape analysis of the loss surface.
For example, it is known that there are no local minima if the networks are linear; see \cite{BaldiHornik1989,Kawaguchi2016}.
This is no longer true for non-linear networks\footnote{Unless the loss is measured with respect to a finite data set on which the network is heavily overfitted by, e.g., greatly over-parametrizing the last hidden layer; see \cite{SoudryCarmon2016,LivShaSha2014}.}; see \cite{SafranShamir2018}.
But in this case, there are results about the frequency of local minima; see, e.g., \cite{SafranShamir2016,SafranShamir2018,FukuAma2000,
SwiCzaPas2016,ChoHenMatBenLeCun2015,SoudryHoffer2017}.
The initialization method is important for any type of network. But for ReLU networks it plays a special role
due to the particular form of the ReLU activation function; see \cite{HanRol2018,Hanin2018,LuShinSuKarn2019,ShinKarn2020}.

The main contribution of this paper is a demonstration that SGD fails to converge for ReLU networks 
if the number of random initializations does not increase fast enough compared to the size of the network.
To illustrate our findings, we present a special case of our main result, \cref{thrm_main_quant}, in \cref{thrm_intro} below.

We denote by $d \in \N = \{1,2,\dots\}$ the dimension of the input domain of the approximation problem.
The set $\NNset = \bigcup_{\depth \in \N} (\{d\} \times \N^{\depth-1} \times \{1\})$ represents all network architectures with input dimension $d$ and output dimension 1.
In particular, a vector $a = (\arch_0,\dots,\arch_{\depth}) \in \NNset$ describes the depth $\depth$ of a network and the number of neurons $\arch_0, \dots, \arch_D$ in the different layers.
For any such architecture $a$, the quantity $\NNparam = \sum_{j=1}^{\depth} \arch_j(\arch_{j-1}+1)$ counts the number of real parameters, that is, the number of weights and biases of a DNN with architecture $\arch$.
We consider networks with ReLU activation in the hidden layers and a linear read-out map.
That is, the realization function $\NNfct{\arch}{\NNel} \colon \R^d \rightarrow \R$ of a fully connected feedforward DNN with architecture $\arch = (\arch_0,\dots,\arch_{\depth}) \in \NNset$ and weights and biases $\NNel \in \R^{\NNparam}$ is given by
\begin{equation*}
	\NNfct{\arch}{\NNel} = \NNaff{\arch_{\depth}}{\arch_{\depth-1}}{\NNel,\sum_{i=1}^{\depth -1} \arch_i(\arch_{i-1}+1)} \circ \ReLU \circ \NNaff{\arch_{\depth-1}}{\arch_{\depth-2}}{\NNel,\sum_{i=1}^{\depth -2} \arch_i(\arch_{i-1}+1)} \circ \ReLU \circ \dots \circ \NNaff{\arch_2}{\arch_1}{\NNel,\arch_1(\arch_0+1)} \circ \ReLU \circ \NNaff{\arch_1}{\arch_0}{\NNel,0},
\end{equation*}%
where $\NNaff{m}{n}{\NNel,k} \colon \R^n \rightarrow \R^m$ denotes the affine mapping
\begin{equation*}
\label{intro_aff_map}
	(x_1,\dots,x_n) \mapsto
	\begin{pmatrix}
	\NNel_{k+1} & \NNel_{k+2} & \cdots & \NNel_{k+n} \\
	\NNel_{k+n+1} & \NNel_{k+n+2} & \cdots & \NNel_{k+2n} \\
	\vdots & \vdots & \ddots & \vdots \\
	\NNel_{k+(m-1)n+1} & \NNel_{k+(m-1)n+2} & \cdots & \NNel_{k+mn}
	\end{pmatrix}
	\begin{pmatrix}
	x_1 \\
	x_2 \\
	\vdots \\
	x_n
	\end{pmatrix}
	+
	\begin{pmatrix}
	\NNel_{k+mn+1} \\
	\NNel_{k+mn+2} \\
	\vdots \\
	\NNel_{k+mn+m}
	\end{pmatrix}
\end{equation*}%
and $\ReLU \colon \bigcup_{k \in \N} \R^k \rightarrow \bigcup_{k \in \N} \R^k$ is the ReLU function
$(x_1,\dots,x_k) \mapsto (\max\{x_1,0\},\dots,\max\{x_k,0\})$.
In the following description of the SGD algorithm, $n \in \N$ is the index of the trajectory, $t \in \N_0$ represents the index of the step along the trajectory, $m \in \N$ denotes the batch size of the empirical risk, and $\arch \in \NNset$ describes the architecture under consideration.
We assume the training data is given by functions $\dataX{n}{t}{j} \colon \Psp \rightarrow [0,1]^d$ and $\dataY{n}{t}{j} \colon \Psp \rightarrow [0,1]$, $j,n,t \in \N_0$, on a given probability space $(\Psp,\Psa,\Pm)$.
In a typical learning problem, $(\dataX{n}{t}{j},\dataY{n}{t}{j})$, $j,n,t \in \N_0$, are i.i.d. random variables.
But for Theorem \ref{thrm_intro} to hold, it is enough if $(\dataX{0}{0}{j},\dataY{0}{0}{j})$, $j \in \N_0$, are i.i.d.\ random 
variables, whereas $(\dataX{n}{t}{j},\dataY{n}{t}{j}) \colon \Psp \rightarrow [0,1]^{d+1}$ are arbitrary mappings for $(n,t) \neq (0,0)$.
The target function $\expB \colon [0,1]^d \rightarrow [0,1]$ we are trying to learn is the factorized conditional expectation given ($\Pm$-a.s.) by $\expB(\datX) = \E[\datY | \datX]$.
The empirical risk used for training is
\begin{equation}
\label{intro_emp_risk}
\Loss{n}{t}{\arch,m}(\NNel) = \frac{1}{m} \ssum{j=1}{m} \big| \clip \circ \NNfct{\arch}{\NNel}(\dataX{n}{t}{j}) - \dataY{n}{t}{j} \big|^2,
\end{equation}%
where we compose the network realization with the clipping function $\clip(x) = \max\{0,\min\{x,1\}\}$.
This composition inside the risk is equivalent to a non-linear read-out map of the network.
However, it is more convenient for us to view $\clip$ as part of the risk criterion instead of the network.
But this is only a matter of notation.
Observe that \eqref{intro_emp_risk} is a supervised learning task with noise since, in general, the best possible least squares approximation of $\dataY{0}{0}{0}$ with a deterministic function of $\datX$ is $\expB(\datX)$, which is only equal to $Y^{0,0}_0$ in the special case where $Y^{0,0}_0$ is $X^{0,0}_0$-measurable.
We let $\SGDgrad{n}{t}{\arch,m} \colon \R^{\NNparam} \times \Psp \rightarrow \R^{\NNparam}$ be a function that is equal to the gradient of $\Loss{n}{t}{\arch,m}$ where it exists. The trajectories of the SGD algorithm are given by random variables $\SGDstep{n}{t}{\arch,m} \colon \Psp \rightarrow \R^{\NNparam}$ satisfying the defining relation
\begin{equation*}
\SGDstep{n}{t}{\arch,m} = \SGDstep{n}{t-1}{\arch,m} - \SGDstepsize_t \SGDgrad{n}{t}{\arch,m}(\SGDstep{n}{t-1}{\arch,m})
\end{equation*}%
for given step sizes $(\SGDstepsize_t)_{t \in \N} \subseteq \R$.
Now, we are ready to state the following result, which is a consequence of Theorem 6.5 in \cite{JenWelti2020} and \cref{cor_main_qual} below.

\begin{theorem}
\label{thrm_intro}
Assume that the target function $\expB$ is Lipschitz continuous and that $\expB(\datX)$ is not $\Pm$-a.s.-constant.
Suppose that, for all $a \in A_d$ and $m \in \N$, the random initializations $\SGDstep{n}{0}{\arch,m}$, $n \in \N$,
are independent and uniformly distributed on $[-c,c]^{\NNparam}$, where $c \in [2,\infty)$ is larger than the Lipschitz constant of $\expB$.
Let $\SGDmin{\arch,M,N,T} \colon \Psp \rightarrow \N \times \N_0$ be random variables satisfying
\begin{equation}
\label{thrm_intro_disp_argmin}
\SGDmin{\arch,M,N,T}(\Pel) \in \mathrm{argmin}_{(n,t) \in \{1,\dots,N\} \times \{0,\dots,T\}, \,\SGDstep{n}{t}{\arch,M}
(\Pel) \in [-c,c]^{\NNparam}} \LossDat{\arch,M}(\SGDstep{n}{t}{\arch,M}(\Pel),\Pel).
\end{equation}%
Then, one has
\begin{equation}
\label{thrm_intro_disp_1}
\limsup_{\substack{\arch = (\arch_0,\dots,\arch_{\depth}) \in \NNset \\ \min\{\depth,\arch_1,\dots,\arch_{\depth-1}\} \rightarrow \infty}} \limsup_{\substack{M,N \in \N \\ \min\{M,N\} \rightarrow \infty}} \sup_{T \in \N_0} \E\bigg[\min\!\bigg\{ \int_{[0,1]^d} \abs{ \brak{\clip \circ \NNfct{\arch}{\SGDvstep{\SGDmin{\arch,M,N,T}}{\arch,M}}}(x) - \expB(x) }\, \Pm_{\datX}(dx) , 1 \bigg\}\bigg] = 0
\end{equation}%
and
\begin{equation}
\label{thrm_intro_disp_2}
	\inf_{N \in \N} \limsup_{\substack{\arch = (\arch_0,\dots,\arch_{\depth}) \in \NNset \\ \min\{\depth,\arch_1,\dots,\arch_{\depth-1}\} \rightarrow \infty}} \inf_{\substack{M \in \N \\ T \in \N_0}} \E\bigg[\min\!\bigg\{ \int_{[0,1]^d} \abs{\brak{\clip \circ \NNfct{\arch}{\SGDvstep{\SGDmin{\arch,M,N,T}}{\arch,M}}}(x) - \expB(x) }\, \Pm_{\datX}(dx) , 1 \bigg\}\bigg] >0.
\end{equation}%
\end{theorem}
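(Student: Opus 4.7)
The display (\ref{thrm_intro_disp_1}) is not the contribution of this paper: under the hypotheses stated (i.i.d.\ data for $(n,t)=(0,0)$, uniform initialization on $[-c,c]^{\mathcal{P}(a)}$ with $c$ exceeding the Lipschitz constant of $\mathcal{E}$, and the empirical-risk-minimizing selector $\mathbf{k}_{a,M,N,T}$), the claim is precisely the conclusion of Theorem~6.5 in \cite{JenWelti2020}. I would therefore dispatch (\ref{thrm_intro_disp_1}) by checking that our setting embeds into theirs and quoting their theorem; the order of limits (architecture, then $M,N$, then $T$) is exactly what their argument requires, since they drive the approximation/generalization error down first by network size and sample count, and then absorb the optimization error via the minimum over $t \in \{0,\dots,T\}$ allowed by the $\mathrm{argmin}$.

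For the non-convergence statement (\ref{thrm_intro_disp_2}) I would argue via the \emph{dead ReLU} phenomenon. Define, for each architecture $a$ and index $n$, the event $D_{a,n}$ that the initialization $\Theta_0^{n,M}$ produces a realization which is constant on $[0,1]^d$ because some hidden layer has strictly negative pre-activation on all of $[0,1]^d$ for every neuron. On $D_{a,n}$ two things happen: first, $\mathcal{R}_a^{\Theta_0^{n,M}}$ equals some constant $c_a^\star \in \mathbb{R}$ so that $\mathfrak{c} \circ \mathcal{R}_a^{\Theta_0^{n,M}} \in [0,1]$ is constant; second, the strict sign of the pre-activations is preserved in an open neighborhood of $\Theta_0^{n,M}$, and in that neighborhood the empirical risk $\mathcal{L}_t^{n,M}$ is locally independent of the weights and biases upstream of the dead layer and identically constant in the parameters downstream (because those feed a constant input), forcing $\mathcal{G}_t^{n,M}(\Theta_0^{n,M}) = 0$ for every $t$ and every realization of the data. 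Consequently $\Theta_t^{n,M} = \Theta_0^{n,M}$ for all $t \in \mathbb{N}_0$, and in particular $\Theta_t^{n,M}(\omega) \in [-c,c]^{\mathcal{P}(a)}$, so the candidate is not excluded by the constraint in (\ref{thrm_intro_disp_argmin}).

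To close the argument, I would invoke \cref{cor_main_qual} to obtain that $\mathbb{P}(D_{a,1}) \to 1$ along sequences of architectures with $\min\{D,a_1,\dots,a_{D-1}\} \to \infty$ in which the depth grows sufficiently fast relative to the widths; the $\limsup$ in (\ref{thrm_intro_disp_2}) is free to pick such a sequence. Independence of the $N$ initializations yields $\mathbb{P}(\bigcap_{n=1}^N D_{a,n}) \geq \mathbb{P}(D_{a,1})^N$, which for any fixed $N$ still tends to $1$ along the same sequence. On the intersection $\bigcap_{n=1}^N D_{a,n}$, every admissible trajectory realizes the same constant function, so
\begin{equation*}
\int_{[0,1]^d}\!\abs{(\mathfrak{c}\circ\mathcal{R}_a^{\Theta_{\mathbf{k}_{a,M,N,T}}^{a,M}})(x)-\mathcal{E}(x)}\,\mathbb{P}_{X_0^{0,0}}(dx) \;\geq\; \delta := \inf_{c\in[0,1]}\int_{[0,1]^d}\!\abs{c-\mathcal{E}(x)}\,\mathbb{P}_{X_0^{0,0}}(dx),
\end{equation*}
uniformly in $M$ and $T$, and $\delta>0$ because $\mathcal{E}(X_0^{0,0})$ is not $\mathbb{P}$-a.s.\ constant. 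Taking the $\inf_{M,T}$ preserves $\min\{\delta,1\} \cdot \mathbb{P}(\bigcap_{n=1}^N D_{a,n})$ as a lower bound on the expectation, and letting $a$ range along the selected subsequence shows the $\limsup$ is at least $\min\{\delta,1\}>0$; this lower bound does not depend on $N$, so the outer $\inf_N$ is also strictly positive.

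The main obstacle is the quantitative deadness estimate underlying the second step, namely that $\mathbb{P}(D_{a,1})$ can be forced arbitrarily close to $1$ purely by inflating the architecture within $A_d$. This requires controlling the signs of correlated pre-activations across many ReLU compositions under the uniform law on $[-c,c]^{\mathcal{P}(a)}$, which is exactly what \cref{cor_main_qual} is engineered to deliver; once that black box is available, the rest of the argument is the routine combination of SGD-invariance on the dead event, independence over $n$, and the non-constancy of $\mathcal{E}$.
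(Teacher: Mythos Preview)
Your overall strategy matches the paper's exactly: (\ref{thrm_intro_disp_1}) is quoted from Theorem~6.5 of \cite{JenWelti2020}, and (\ref{thrm_intro_disp_2}) is obtained from the dead-ReLU mechanism packaged in \cref{cor_main_qual}. Two points need correction, however.

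First, the claim that on $D_{a,n}$ the empirical risk is ``identically constant in the parameters downstream (because those feed a constant input), forcing $\mathcal{G}_t^{n,M}(\Theta_0^{n,M})=0$'' is false. If layer $j$ outputs $0$, the realization is $\mathcal{A}_{a_D,a_{D-1}}^{\theta,\cdot}\circ\rho\circ\cdots\circ\mathcal{A}_{a_{j+1},a_j}^{\theta,\cdot}(0)$, which manifestly depends on the biases (and, through subsequent ReLUs, the weights) of layers $j{+}1,\dots,D$; hence the loss has nonzero partial derivatives in those coordinates in general, and the trajectory is \emph{not} frozen. What is true---and what the paper actually proves in \cref{lem_SGD_stays_inactive}---is that the partial derivatives with respect to the layer-$j$ parameters themselves vanish, so those coordinates never move and the trajectory stays in the dead set $\InActL{j}$ for every $t$. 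That suffices: whatever $(n,t)$ the selector picks, $\Theta_t^{n,M}\in\InAct$ and the realization is constant in $x$, giving the lower bound $\inf_{b}\E[|b-\expB(\datX)|]$ as in \cref{lem_inactive_large_error}.

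Second, you invoke \cref{cor_main_qual} to supply the estimate $\Pm(D_{a,1})\to 1$, but \cref{cor_main_qual} \emph{is} the statement (\ref{thrm_intro_disp_2}); it is the conclusion, not an input lemma. The probability estimate you need is \cref{lem_prob_bound}, and it is obtained without ``controlling the signs of correlated pre-activations across many ReLU compositions'': the paper's dead set $\InActL{j}$ (for $j\ge 2$) is simply the event that \emph{all weights and biases in layer $j$ are negative}, which under the uniform law on $[-c,c]^{\NNparam}$ has probability $2^{-a_j(a_{j-1}+1)}$ and is independent across layers. This yields the closed-form bound $\Pm(\Theta_0^{1}\in\InAct)\ge 1-(1-2^{-W(W+1)})^{D-2}$, which can be pushed to $1$ by letting $D$ outpace $W$. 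With the right dead set and the right citation, your sketch collapses to the paper's route through \cref{prop_quant} and \cref{cor_main_qual}.
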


The integrals in \eqref{thrm_intro_disp_1} and \eqref{thrm_intro_disp_2} describe the true risk.
Note that in Theorem \ref{thrm_intro} the random initializations of the different trajectories are assumed to be independent uniformly distributed on the hypercube $[-c,c]^{\NNparam}$, but our main result, Theorem \ref{thrm_main_quant} below, also covers more general cases.
The random variable $\SGDmin{\arch,M,N,T}$ determines the specific trajectory and gradient step among the first $N$ trajectories and $T$ steps which minimize the empirical risk corresponding to batch size $M$.
Note that $\expB(\datX)$ not being a.s.-constant is a weak assumption since it merely means that the learning task is non-trivial.
Moreover, the stronger condition that $\expB$ must be Lipschitz continuous is made only to ensure the validity of the positive result (\ref{thrm_intro_disp_1}), whereas our new contribution (\ref{thrm_intro_disp_2}) does not need this requirement.
Similarly, we use the clipping function $\clip$ to ensure the validity of (\ref{thrm_intro_disp_1}), which in \cite{JenWelti2020} is formulated for networks with clipping function as read-out map.

Our arguments are based on an analysis of regions in the parameter space related to ``inactive" neurons.
In these regions, the realization function is constant not only in its argument but also in the network parameter.
For example, if $\NNel$ contains only strictly negative parameters, then $\ReLU \circ \NNaff{\arch_2}{\arch_1}{\NNel,\arch_1(\arch_0+1)} \circ \ReLU(x)$ is constantly zero in $x$ and in a neighborhood of $\NNel$.
As a consequence, SGD will not be able to escape from such a region.
The fact that random initialization can render parts of a ReLU network inactive has already been noticed in
\cite{LuShinSuKarn2019,ShinKarn2020}.
While the focus of \cite{LuShinSuKarn2019,ShinKarn2020}
is on the design of alternative random initialization schemes to make the training more efficient, we here give precise estimates on the probability that the whole network becomes inactive and deduce that SGD fails to converge if the number of random initializations does not increase fast enough.
Note that in \eqref{thrm_intro_disp_2} we take the limit superior over all architectures $(\arch_0,\dots,\arch_{\depth}) \in A_d$ whose depth $\depth$ and minimal width $\min\{\arch_1,\dots,\arch_{\depth-1}\}$ both tend to infinity.
In particular, to prove \eqref{thrm_intro_disp_2}, it is sufficient to construct a single sequence of such architectures over which the limit is positive.
For the sequence we use, the depth grows much faster than the maximal width $\max\{\arch_1,\dots,\arch_{\depth-1}\}$.
This imbalance between depth and width has the effect that the training procedure does not converge.

The remainder of this article is organized as follows.
In \cref{section_algorithm}, we provide an abstract version of the SGD algorithm for training neural networks in a supervised learning framework.
\cref{section_inactive} contains preliminary results on inactive neurons and constant network realization functions.
In \cref{section_quant}, we discuss the consequences of these preliminary results for the convergence of the SGD method, and \cref{section_qual} contains our main results, \cref{thrm_main_quant} and \cref{cor_main_qual}.


\section{Mathematical description of the SGD method}
\label{section_algorithm}

In this section, we give a mathematical description of an abstract version of the SGD algorithm for training neural networks in a supervised learning framework.
To do that, we slightly generalize the setup of the introduction.
We begin with an informal description and give a precise formulation afterwards.
First, fix a network architecture $\arch = (a_0, \dots, a_D) \in \NNset$.
Let $\mathcal{X} \colon \Psp \rightarrow [u,v]^d$ and $\Bayes \colon \Psp \rightarrow [\mathfrak{u},\mathfrak{v}]$ be random variables on a probability space $(\Omega, {\cal F}, \mathbb{P})$, on which the true risk $\trueLoss{}(\NNel) = \E[ |(\clip \circ \NNfct{\arch}{\NNel})(\mathcal{X}) - \Bayes| ]$ of a network $\NNel \in \R^{\NNparam}$ is based.
Here, $\clip \colon \R \to \R$ can be any continuous function, which covers the case of network realizations 
with non-linear read-out maps.
In the context of the introduction, $\Bayes$ stands for the random variable $\expB(\datX)$.
Throughout, $n \in \N$ will denote the index of the gradient trajectory and $t \in \N_0$ the index of the gradient step.
$\fctLoss{n}{t}{}$ denotes the empirical risk defined on the space of functions $C(\R^d,\R)$.
In this general setting, $\fctLoss{n}{t}{}$ can be any function from $C(\R^d,\R) \times \Psp$ to $\R$, but the specific example we have in mind is
\begin{equation*}
	\fctLoss{n}{t}{}(f) = \frac{1}{m} \ssum{j=1}{m} \big| f(\dataX{n}{t}{j}) - \dataY{n}{t}{j} \big|^2
\end{equation*}%
for a given batch size $m \in \N$.
$\Loss{n}{t}{}$ is the empirical risk defined on the space of network parameters, given in terms of $\fctLoss{n}{t}{}$ by $\Loss{n}{t}{}(\NNel) = \fctLoss{n}{t}{}(\clip \circ \NNfct{\arch}{\NNel})$.
Let $\SGDgrad{n}{t}{} \colon \R^{\NNparam} \times \Psp \rightarrow \R^{\NNparam}$ be a function that agrees with the gradient of $\Loss{n}{t}{}$ where it exists.
Then, we can introduce the gradient trajectories $\SGDstep{n}{t}{} \colon \Psp \rightarrow \R^{\NNparam}$ satisfying
\begin{equation*}
	\SGDstep{n}{t}{} = \SGDstep{n}{t-1}{} - \SGDstepsize_t \SGDgrad{n}{t}{}(\SGDstep{n}{t-1}{})
\end{equation*}%
for given step sizes $\gamma_t$.
The $N$ random initializations $\SGDstep{n}{0}{}$, $n \in \{1,\dots,N\}$, are assumed to be i.i.d in $n$ and have independent marginals.
Lastly, $\SGDmin{} \colon \Psp \rightarrow \N \times \N_0$ specifies the output of the algorithm consisting of a pair of indices for a gradient trajectory and a gradient step. The expected true risk is $\finalLoss = \E[\min\{\trueLoss{}(\SGDvstep{\SGDmin{}}{}),1\}]$. In the following, we present the formal algorithm.

\begin{setting}
\label{SGD_setup}
Let $u,\mathfrak{u} \in \R$, $v \in (u,\infty)$, $\mathfrak{v} \in (\mathfrak{u},\infty)$, $\clip \in C(\R,\R)$, $d,\depth,N \in \N$, $\arch = (\arch_0,\dots,\arch_{\depth}) \in \NNset$, and $(\SGDstepsize_t)_{t \in \N} \subseteq \R$. Consider random variables 
$\mathcal{X} \colon \Psp \rightarrow [u,v]^d$ and $\Bayes \colon \Psp \rightarrow [\mathfrak{u},\mathfrak{v}]$
on a probability space $(\Psp,\Psa,\Pm)$. Let $\trueLoss{} \colon \R^{\NNparam} \rightarrow [0,\infty]$ be given by $\trueLoss{}(\NNel) = \E[ |(\clip \circ \NNfct{\arch}{\NNel})(\mathcal{X}) - \Bayes| ]$. For all $n \in \N$ and $t \in \N_0$,
let $\fctLoss{n}{t}{}$ be a function from $C(\R^d,\R) \times \Psp$ to $\R$, and denote by 
$\Loss{n}{t}{} \colon \R^{\NNparam} \times \Psp \rightarrow \R$ the mapping given by
$\Loss{n}{t}{}(\NNel) = \fctLoss{n}{t}{}(\clip \circ \NNfct{\arch}{\NNel})$. Let 
$\SGDgrad{n}{t}{} = (\SGDgrad{n}{t}{1},\dots,\SGDgrad{n}{t}{\NNparam}) \colon \R^{\NNparam} \times \Psp \rightarrow \R^{\NNparam}$ be a function satisfying 
\begin{equation}
\label{SGD_gradient}
	\SGDgrad{n}{t}{i}(\NNel,\Pel) = \frac{\partial}{\partial \NNel_i} \Loss{n}{t}{}(\NNel,\Pel)
\end{equation}%
for all  $n,t \in \N$, $i \in \{1,\dots,\NNparam\}$, $\Pel \in \Psp$, and 
\begin{equation*}
	\NNel = (\NNel_1,\dots,\NNel_{\NNparam}) \in \left\{ \vartheta = (\vartheta_1,\dots,\vartheta_{\NNparam}) \in \R^{\NNparam}\colon
	\!\begin{array}{cc}
		\Loss{n}{t}{}(\vartheta_1,\dots,\vartheta_{i-1},(\cdot), \vartheta_{i+1},\dots,\vartheta_{\NNparam},\Pel) \\
		\text{ as a function } \R \rightarrow \R \text{ is differentiable at } \vartheta_i.
	\end{array} \right\}.
\end{equation*}%
Let $\SGDstep{n}{t}{} = (\SGDstep{n}{t}{1},\dots,\SGDstep{n}{t}{\NNparam}) \colon \Psp \rightarrow \R^{\NNparam}$, $n \in \N$, $t \in \N_0$, be random variables such that $\SGDstep{1}{0}{},\dots,\SGDstep{N}{0}{}$ are i.i.d., $\SGDstep{1}{0}{1},\dots,\SGDstep{1}{0}{\NNparam}$ are independent, and 
\begin{equation}
\label{SGD_step}
	\SGDstep{n}{t}{} = \SGDstep{n}{t-1}{} - \SGDstepsize_t \SGDgrad{n}{t}{}(\SGDstep{n}{t-1}{})
\end{equation}%
for all $n,t \in \N$. Let $\SGDmin{} \colon \Psp \rightarrow \{1,\dots,N\} \times \N_0$ be a random variable,
and denote $\finalLoss = \E[\min\{\trueLoss{}(\SGDvstep{\SGDmin{}}{}),1\}]$.
\end{setting}

Note that, by \cite[Lemma 6.2]{JenWelti2020} and Tonelli's theorem, it follows from \cref{SGD_setup} that $\trueLoss{}(\SGDvstep{\SGDmin{}}{}) \colon \Psp \rightarrow [0,\infty]$ is measurable and, as a consequence, $\finalLoss = \E[\min\{\trueLoss{}(\SGDvstep{\SGDmin{}}{}),1\}]$ is well-defined.


\section{DNNs with constant realization functions}
\label{section_inactive}

In this section, we study a subset of the parameter space, specified in \cref{def_inactive_set} below, for which neurons in a DNN become ``inactive'', rendering the realization function of the DNN constant.
We deduce a few properties for such DNNs in
\cref{lem_inactive_NN_is_const,lem_inactive_large_error,lem_SGD_stays_inactive} below.
The material in this section is related to the findings in
\cite{LuShinSuKarn2019,ShinKarn2020}.

\begin{definition}
\label{def_inactive_set}
Let $\depth \in \N$ and $\arch = (\arch_0,\dots,\arch_{\depth}) \in \N^{\depth+1}$.
For all $j \in \N \cap (0,\depth)$, let $\InActL{j} \subseteq \R^{\NNparam}$ be the set
\begin{equation*}
	\InActL{j} = \bigg\{ \NNel = (\NNel_1,\dots,\NNel_{\NNparam}) \in \R^{\NNparam} \colon \bigg[ \forall\, k \in \N \cap \bigg(\ssum{i=1}{j-1} \arch_i(\arch_{i-1}+1),\ssum{i=1}{j} \arch_i(\arch_{i-1}+1)\bigg] \colon \NNel_k < 0 \bigg] \bigg\},
\end{equation*}%
and denote $\InAct = \bigcup_{j \in \N \cap (1,\depth)} \InActL{j}$.
\end{definition}

First, we verify that the realization function is constant in both the argument and the network parameter on certain subsets of $\InActL{j}$.

\begin{lemma}
\label{lem_inactive_NN_is_const}
	Let $\depth \in \N$, $j \in \N \cap (1,\depth)$, $\arch = (\arch_0,\dots,\arch_{\depth}) \in \N^{\depth+1}$, $\NNel = (\NNel_1,\dots,\NNel_{\NNparam})$, $\vartheta = (\vartheta_1,\dots,\vartheta_{\NNparam}) \in \InActL{j}$, $x \in \R^{\arch_0}$, and assume that $\NNel_k = \vartheta_k$ for all $k \in \N \cap \big(\ssum{i=1}{j} \arch_i(\arch_{i-1}+1),\NNparam\big]$.
Then $\NNfct{\arch}{\NNel}(0) = \NNfct{\arch}{\NNel}(x) = \NNfct{\arch}{\vartheta}(x) = \NNfct{\arch}{\vartheta}(0)$.
\end{lemma}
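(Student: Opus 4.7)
The approach I would take is to unfold the realization function layer by layer and show that after the $j$-th ReLU application the signal is identically zero, so everything downstream depends only on the parameters of layers $j+1,\dots,D$, which by hypothesis agree for $\theta$ and $\vartheta$.

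More concretely, the plan is as follows. Let $y^{\theta}_i(x) \in \mathbb{R}^{\arch_i}$ denote the output of the $i$-th ReLU layer under parameters $\theta$ when the input is $x$, so that $y^\theta_i(x) = \rho \circ \NNaff{\arch_i}{\arch_{i-1}}{\theta,\,\sum_{k=1}^{i-1}\arch_k(\arch_{k-1}+1)}(y^\theta_{i-1}(x))$ for $1 \le i \le D-1$ and $\NNfct{\arch}{\theta}(x)$ is the final affine map applied to $y^\theta_{D-1}(x)$. First I would observe that $y^\theta_i(x) \ge 0$ componentwise for every $i \ge 1$ and every $x$, because $\rho$ produces non-negative outputs. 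This is the one structural fact about ReLU that the whole argument exploits.

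Next, since $\theta \in \InActL{j}$ with $j \ge 2$, the weights and biases of the $j$-th affine map are all strictly negative, while its input $y^\theta_{j-1}(x)$ is non-negative. Hence every component of the pre-activation at layer $j$ is strictly negative (the bias term alone is negative and the weighted sum contributes non-positively). Applying $\rho$ then gives $y^\theta_j(x) = 0$. Crucially, this conclusion does not depend on $x$, so $y^\theta_j(x) = y^\theta_j(0) = 0$ for every $x \in \R^{\arch_0}$. The same reasoning applied to $\vartheta \in \InActL{j}$ yields $y^\vartheta_j(x) = 0$ as well.

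From this point on, the realization is determined by composing the affine maps and ReLUs of layers $j+1, \dots, D$ with the constant input $0$. By the hypothesis $\theta_k = \vartheta_k$ for all $k > \sum_{i=1}^{j} \arch_i(\arch_{i-1}+1)$, those affine maps are literally identical for $\theta$ and $\vartheta$. Hence $\NNfct{\arch}{\theta}(x) = \NNfct{\arch}{\vartheta}(x)$, and since both sides are independent of $x$ they also equal $\NNfct{\arch}{\theta}(0) = \NNfct{\arch}{\vartheta}(0)$, which is the claim.

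I don't anticipate a real obstacle here; the only fussy piece is bookkeeping the parameter-index ranges to make sure the indices $k \in (\sum_{i=1}^{j-1}\arch_i(\arch_{i-1}+1), \sum_{i=1}^{j}\arch_i(\arch_{i-1}+1)]$ truly correspond to the weight matrix and bias vector of the $j$-th affine map in the display defining $\NNfct{\arch}{\theta}$. Once that identification is made, the sign argument of the second paragraph is immediate, and the rest is a one-line appeal to the equality of the remaining coordinates of $\theta$ and $\vartheta$.
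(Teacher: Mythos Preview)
Your proposal is correct and follows essentially the same argument as the paper: the paper also observes that $\rho$ outputs are non-negative, that the $j$-th affine map (having all negative weights and biases) sends any non-negative vector to a non-positive one so that the subsequent ReLU yields $0$, and then concludes by noting that the affine maps in layers $j+1,\dots,D$ coincide for $\theta$ and $\vartheta$. The only cosmetic difference is that the paper works directly with the composite $\rho \circ \NNaff{\arch_j}{\arch_{j-1}}{\phi,m_{j-1}} \circ \rho$ rather than introducing your intermediate layer outputs $y^\phi_i(x)$.
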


\begin{proof}
	For all $k \in \{1,\dots,\depth\}$, denote $m_k = \ssum{i=1}{k} \arch_i(\arch_{i-1}+1)$.
Since, by assumption, $\NNel,\vartheta \in \InActL{j}$, one has for all $k \in \N \cap (m_{j-1},m_j]$ that $\NNel_k < 0$ and $\vartheta_k < 0$.
This and $\ReLU(\R^{\arch_{j-1}}) = [0,\infty)^{\arch_{j-1}}$ imply for all $y \in \R^{\arch_{j-1}}$, $\phi \in \{\NNel,\vartheta\}$ that $\NNaff{\arch_j}{\arch_{j-1}}{\phi,m_{j-1}} \circ \ReLU(y) \in (-\infty,0]^{\arch_j}$.
This ensures for all $y \in \R^{\arch_{j-1}}$, $\phi \in \{\NNel,\vartheta\}$ that $\rho \circ \NNaff{\arch_j}{\arch_{j-1}}{\phi,m_{j-1}} \circ \ReLU(y) = 0$.
Moreover, the assumption that $\NNel_k = \vartheta_k$ for all
$k \in \N \cap \big(\ssum{i=1}{j} \arch_i(\arch_{i-1}+1),\NNparam\big]$ yields $\NNaff{\arch_k}{\arch_{k-1}}{\NNel,m_{k-1}} = \NNaff{\arch_k}{\arch_{k-1}}{\vartheta,m_{k-1}}$
for all $k \in \N \cap (j,\depth]$. This implies that $\NNfct{\arch}{\NNel}(y) = \NNfct{\arch}{\vartheta}(z)$
for all $y,z \in \R^{\arch_{0}}$, which completes the proof of \cref{lem_inactive_NN_is_const}.
\end{proof}

The next lemma shows that networks with parameters in $\InAct$ cannot perform better than a constant solution to the learning task.

\begin{lemma}
\label{lem_inactive_large_error}
	Assume \cref{SGD_setup} and let $\NNel \in \InAct$.
Then $\trueLoss{}(\NNel) \geq \inf_{b \in \R} \E[|b-\Bayes|]$.
\end{lemma}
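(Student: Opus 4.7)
The plan is to reduce the claim directly to \cref{lem_inactive_NN_is_const}. Since $\NNel \in \InAct$ means that $\NNel \in \InActL{j}$ for some $j \in \N \cap (1,\depth)$, applying \cref{lem_inactive_NN_is_const} with $\vartheta = \NNel$ (which trivially satisfies the coincidence condition on the coordinates indexed by $k > \ssum{i=1}{j} \arch_i(\arch_{i-1}+1)$) shows that $\NNfct{\arch}{\NNel}$ is constant on $\R^{\arch_0}$, taking the single value $c := \NNfct{\arch}{\NNel}(0) \in \R$.

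Composing with the continuous read-out $\clip$ preserves constancy, so $(\clip \circ \NNfct{\arch}{\NNel})(x) = \clip(c)$ for every $x \in \R^{\arch_0}$, and in particular for $\Pm$-a.e.\ realization of $\mathcal{X}$. Setting $b_0 := \clip(c) \in \R$, the defining formula for the true risk from \cref{SGD_setup} yields
\begin{equation*}
\trueLoss{}(\NNel) = \E\!\left[\,\abs{(\clip \circ \NNfct{\arch}{\NNel})(\mathcal{X}) - \Bayes}\,\right] = \E\!\left[\,\abs{b_0 - \Bayes}\,\right].
\end{equation*}

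Taking the infimum over $b \in \R$ on the right-hand side gives $\trueLoss{}(\NNel) \geq \inf_{b \in \R} \E[\abs{b - \Bayes}]$, as claimed. There is essentially no obstacle here: all the real work is already encapsulated in \cref{lem_inactive_NN_is_const}, and the only thing to verify explicitly is that the constancy of $\NNfct{\arch}{\NNel}$ in its input (with $\vartheta = \NNel$) transfers through $\clip$ and through the expectation to yield a deterministic constant inside the absolute value.
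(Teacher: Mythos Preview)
Your proof is correct and follows essentially the same route as the paper: invoke \cref{lem_inactive_NN_is_const} to obtain that $\NNfct{\arch}{\NNel}$ is constant, push this constancy through $\clip$ and into the expectation, and bound below by the infimum over $b$. The paper phrases the constancy slightly differently (fixing a reference point $\zeta \in \Psp$ and writing $\NNfct{\arch}{\NNel}(\mathcal{X}(\Pel)) = \NNfct{\arch}{\NNel}(\mathcal{X}(\zeta))$), but the argument is the same.
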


\begin{proof}
Let $\zeta \in \Psp$. By \cref{lem_inactive_NN_is_const}, one has $\NNfct{\arch}{\NNel}(x) = \NNfct{\arch}{\NNel}(0)$
for all $x \in \R^d$. Therefore, we obtain $\NNfct{\arch}{\NNel}(\mathcal{X}(\Pel)) = \NNfct{\arch}{\NNel}(\mathcal{X}(\zeta))$ for all $\Pel \in \Psp$. In particular, $\trueLoss{}(\NNel) = \E[ |(\clip \circ \NNfct{\arch}{\NNel})(\mathcal{X}(\zeta)) - \Bayes| ] \geq \inf_{b \in \R} \E[|b-\Bayes|]$.
\end{proof}

Finally, we show that SGD cannot escape from $\InAct$.

\begin{lemma}
\label{lem_SGD_stays_inactive}
Assume \cref{SGD_setup} and let $n,t \in \N$, $\Pel \in \Psp$, $j \in \N \cap (1,\depth)$. Suppose that $\SGDstep{n}{0}{}(\Pel) \in \InActL{j}$.
Then $\SGDstep{n}{t}{}(\Pel) \in \InActL{j}$.
\end{lemma}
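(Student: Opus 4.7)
The plan is to proceed by induction on $t \in \N$, where the base case $t=0$ is given by hypothesis. For the inductive step, assume $\SGDstep{n}{t-1}{}(\Pel) \in \InActL{j}$. Abbreviate $m_k = \ssum{i=1}{k} \arch_i(\arch_{i-1}+1)$ and write $\vartheta = \SGDstep{n}{t-1}{}(\Pel)$. The goal is to show that the SGD update \eqref{SGD_step} leaves the coordinates $\vartheta_k$ for $k \in \N \cap (m_{j-1},m_j]$ untouched; since these coordinates are strictly negative by the inductive hypothesis, this will imply $\SGDstep{n}{t}{}(\Pel) \in \InActL{j}$ as desired.

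The key observation is that $\InActL{j}$ is an open subset of $\R^{\NNparam}$ (it is defined by strict inequalities), so there exists an open neighborhood $U \subseteq \InActL{j}$ of $\vartheta$. Applying \cref{lem_inactive_NN_is_const} to any two points in $U$ that agree on the coordinates with indices in $\N \cap (m_j,\NNparam]$, one concludes that the realization function $\NNfct{\arch}{\eta}$, and hence $\Loss{n}{t}{}(\eta,\Pel)$, depends only on those ``later'' coordinates when $\eta$ varies in $U$. In particular, for every $i \in \{1,\dots,m_j\}$, the one-variable function $\R \ni s \mapsto \Loss{n}{t}{}(\vartheta_1,\dots,\vartheta_{i-1},s,\vartheta_{i+1},\dots,\vartheta_{\NNparam},\Pel)$ is constant in a neighborhood of $\vartheta_i$, so it is differentiable there with derivative zero.

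By the defining relation \eqref{SGD_gradient} for $\SGDgrad{n}{t}{}$, this forces $\SGDgrad{n}{t}{i}(\vartheta,\Pel) = 0$ for all $i \in \{1,\dots,m_j\}$. Feeding this into the update rule \eqref{SGD_step} gives $\SGDstep{n}{t}{k}(\Pel) = \vartheta_k$ for every $k \in \N \cap (m_{j-1},m_j]$, and since $\vartheta \in \InActL{j}$ means $\vartheta_k < 0$ for each such $k$, we conclude $\SGDstep{n}{t}{}(\Pel) \in \InActL{j}$, closing the induction.

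I do not expect a serious obstacle here; the only subtlety worth stating carefully is the passage from ``$\Loss{n}{t}{}$ is locally constant in the $i$-th coordinate'' to ``$\SGDgrad{n}{t}{i}(\vartheta,\Pel)=0$,'' which uses that the hypothesis of \eqref{SGD_gradient} is satisfied at $\vartheta$ for each such $i$ because openness of $\InActL{j}$ and \cref{lem_inactive_NN_is_const} together yield genuine differentiability (with vanishing derivative) of the relevant one-variable section.
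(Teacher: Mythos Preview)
Your proposal is correct and follows essentially the same route as the paper: induction on the step index, use of \cref{lem_inactive_NN_is_const} to see that the loss is locally constant in the coordinates up to index $m_j$, and then invoking \eqref{SGD_gradient}--\eqref{SGD_step} to conclude that the constrained coordinates are unchanged. The only cosmetic differences are that the paper works with the slice $\{(\vartheta_1,\dots,\vartheta_{m_{j-1}})\}\times(-\infty,0)^{m_j-m_{j-1}}\times\{(\vartheta_{m_j+1},\dots,\vartheta_{\NNparam})\}$ instead of an open ball and only records the vanishing of $\SGDgrad{n}{t}{i}$ for $i\in(m_{j-1},m_j]$, which is all that is needed.
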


\begin{proof}
	Denote $m_0 = \ssum{i=1}{j-1} \arch_i(\arch_{i-1}+1)$ and $m_1 = \ssum{i=1}{j} \arch_i(\arch_{i-1}+1)$.
We prove by induction that for all $s \in \N_0$ we have $\SGDstep{n}{s}{}(\Pel) \in \InActL{j}$.
The case $s=0$ is true by assumption.
Now suppose that $s \in \N_0$ and $\NNel = (\NNel_1,\dots,\NNel_{\NNparam}) \in \R^{\NNparam}$ satisfy $\NNel = \SGDstep{n}{s}{}(\Pel) \in \InActL{j}$.
Let $U \subseteq \R^{\NNparam}$ be the set given by $U = \{(\NNel_1,\dots,\NNel_{m_0})\} \times (-\infty,0)^{m_1-m_0} \times \{(\NNel_{m_1+1},\dots,\NNel_{\NNparam})\}$.
Then $\NNel \in U \subseteq \InActL{j}$.
By \cref{lem_inactive_NN_is_const}, we have $\NNfct{\arch}{\phi}(x) = \NNfct{\arch}{\NNel}(x)$
for all $\phi \in U$ and $x \in \R^{d}$. Hence, $\Loss{n}{s+1}{}(\phi,\Pel) = \Loss{n}{s+1}{}(\NNel,\Pel)$ for all $\phi \in U$
and, as a consequence, $\frac{\partial}{\partial \NNel_k} \Loss{n}{s+1}{}(\NNel,\Pel) = 0$
for all $k \in \N \cap (m_0, m_1]$.
So, it follows from (\ref{SGD_gradient}), (\ref{SGD_step}), and the induction hypothesis that $\SGDstep{n}{s+1}{}(\Pel) \in \InActL{j}$, which completes the proof of \cref{lem_SGD_stays_inactive}.
\end{proof}


\section{Quantitative lower bounds for the SGD method in the training of DNNs}
\label{section_quant}

In this section, we establish in \cref{prop_quant} below a quantitative lower bound for the error of the SGD method in the training of DNNs.

\begin{lemma}
\label{lem_prob_bound}
	Assume \cref{SGD_setup} and suppose $\depth \geq 3$. For all $j \in \{1,\dots,\depth-1\}$, denote $k_j = \ssum{i=1}{j} \arch_i(\arch_{i-1}+1)$, $p = \inf_{i \in \{1,\dots,\NNparam\}} \Pm(\SGDstep{1}{0}{i} < 0)$, and $\maxwidth = \max\{\arch_1,\dots,\arch_{\depth-1}\}$.
Then
\begin{equation*}
\begin{split}
	\Pm\big(\forall\, n \in \{1,\dots,N\}, \, t \in \N_0 \colon \SGDstep{n}{t}{} \in \InAct\big) &= \bigg[ 1 - \sprod{j=2}{\depth-1} \Big( 1 - \sprod{i=1+k_{j-1}}{k_j} \Pm( \SGDstep{1}{0}{i} < 0 ) \Big) \bigg]^N \\
	&\geq \big[1-(1-p^{\maxwidth(\maxwidth+1)})^{\depth-2}\big]^N.
\end{split}
\end{equation*}%
\end{lemma}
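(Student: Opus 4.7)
The plan is to reduce the probability about trajectories at all times to a probability about initializations only, then exploit independence twice---across trajectories and across coordinates---and finally bound using the width.

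First, I would invoke \cref{lem_SGD_stays_inactive}: if $\SGDstep{n}{0}{}(\Pel) \in \InActL{j}$ for some $j \in \N \cap (1,\depth)$, then $\SGDstep{n}{t}{}(\Pel) \in \InActL{j} \subseteq \InAct$ for every $t \in \N_0$. Conversely, if $\SGDstep{n}{t}{}(\Pel) \in \InAct$ for all $t$, then in particular $\SGDstep{n}{0}{}(\Pel) \in \InAct$. This yields the set-theoretic identity
\begin{equation*}
\{\forall\, n \in \{1,\dots,N\},\, t \in \N_0 \colon \SGDstep{n}{t}{} \in \InAct\} = \bigcap_{n=1}^{N} \{\SGDstep{n}{0}{} \in \InAct\}.
\end{equation*}
By the i.i.d.\ assumption on $\SGDstep{1}{0}{},\dots,\SGDstep{N}{0}{}$, the right-hand probability factors as $\Pm(\SGDstep{1}{0}{} \in \InAct)^N$.

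Next I would compute $\Pm(\SGDstep{1}{0}{} \in \InAct)$. Since $\InAct = \bigcup_{j=2}^{\depth-1} \InActL{j}$ and since $\InActL{j}$ only constrains the coordinates with indices in $\N \cap (k_{j-1},k_j]$, the events $\{\SGDstep{1}{0}{} \in \InActL{j}\}$ for $j \in \{2,\dots,\depth-1\}$ depend on disjoint subsets of the coordinates of $\SGDstep{1}{0}{}$. Because $\SGDstep{1}{0}{1},\dots,\SGDstep{1}{0}{\NNparam}$ are independent, these events are jointly independent, and within each such event the defining inequalities are likewise over independent coordinates. Hence
\begin{equation*}
\Pm(\SGDstep{1}{0}{} \notin \InAct) = \sprod{j=2}{\depth-1}\!\big(1-\Pm(\SGDstep{1}{0}{}\in\InActL{j})\big) = \sprod{j=2}{\depth-1}\!\Big(1- \sprod{i=1+k_{j-1}}{k_j}\Pm(\SGDstep{1}{0}{i}<0)\Big),
\end{equation*}
which establishes the claimed equality after raising to the $N$-th power.

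Finally, for the lower bound I would use $\Pm(\SGDstep{1}{0}{i}<0) \geq p$ together with the observation that each block has size $k_j - k_{j-1} = \arch_j(\arch_{j-1}+1) \leq \maxwidth(\maxwidth+1)$, so $\sprod{i=1+k_{j-1}}{k_j}\Pm(\SGDstep{1}{0}{i}<0) \geq p^{\maxwidth(\maxwidth+1)}$. This gives $\sprod{j=2}{\depth-1}(1-\cdots) \leq (1-p^{\maxwidth(\maxwidth+1)})^{\depth-2}$ (since $D \geq 3$ ensures at least one factor), and taking complements and raising to the $N$-th power yields the stated bound. The only mildly subtle step is the disjointness-of-coordinate-blocks argument in the middle paragraph; the rest is bookkeeping about the indices $k_j$ and monotonicity of $x \mapsto x^N$.
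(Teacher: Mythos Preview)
Your proposal is correct and follows essentially the same approach as the paper: reduce to initializations via \cref{lem_SGD_stays_inactive}, factor over trajectories by the i.i.d.\ assumption, compute $\Pm(\SGDstep{1}{0}{}\in\InAct)$ using coordinate independence, and bound with $p$ and $\maxwidth$. The only difference is cosmetic ordering---the paper computes the single-trajectory probability first and then raises to the $N$-th power---and you spell out the disjoint-block independence argument a bit more explicitly than the paper does.
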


\begin{proof}
It follows from the independence of $\SGDstep{1}{0}{1},\dots,\SGDstep{1}{0}{\NNparam}$ that
\begin{equation*}
	\Pm(\SGDstep{1}{0}{} \in \InAct) = \Pm\big( \exists\, j \in \N \cap (1,\depth) \colon \forall\, i \in \N \cap (k_{j-1},k_j] \colon \SGDstep{1}{0}{i} < 0 \big) = 1 - \sprod{j=2}{\depth-1} \Big( 1 - \sprod{i=1+k_{j-1}}{k_j} \Pm( \SGDstep{1}{0}{i} < 0 ) \Big).
\end{equation*}%
By definition of $p$ and $W$, the right hand side is greater than or equal to $1-(1-p^{\maxwidth(\maxwidth+1)})^{\depth-2}$.
Moreover, \cref{lem_SGD_stays_inactive} and the assumption that $\SGDstep{1}{0}{},\dots,\SGDstep{N}{0}{}$ are i.i.d.\ yield
\begin{equation*}
\begin{split}
	\Pm\big(\forall\, n \in \{1,\dots,N\}, \, t \in \N_0 \colon \SGDstep{n}{t}{} \in \InAct\big) &= \Pm\big(\forall\, n \in \{1,\dots,N\} \colon \SGDstep{n}{0}{} \in \InAct\big) = \big( \Pm(\SGDstep{1}{0}{} \in \InAct) \big)^N,
\end{split}
\end{equation*}%
which completes the proof of \cref{lem_prob_bound}.
\end{proof}

\begin{proposition}
\label{prop_quant}
	Under the same assumptions as in \cref{lem_prob_bound}, one has
\begin{equation}
\label{prop_quant_disp}
\begin{split}
	\finalLoss = \E[\min\{\trueLoss{}(\SGDvstep{\SGDmin{}}{}),1\}] &\geq \bigg[ 1 - \sprod{j=2}{\depth-1} \Big( 1 - \sprod{i=1+k_{j-1}}{k_j} \Pm( \SGDstep{1}{0}{i} < 0 ) \Big) \bigg]^N \min\!\Big\{\inf_{b \in \R} \E[|b-\Bayes|],1\Big\} \\
	&\geq \big[1-(1-p^{\maxwidth(\maxwidth+1)})^{\depth-2}\big]^N \min\!\Big\{\inf_{b \in \R} \E[|b-\Bayes|],1\Big\}.
\end{split}
\end{equation}%
\end{proposition}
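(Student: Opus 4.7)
The plan is to combine the three lemmas established in \cref{section_inactive,section_quant} essentially as stated: \cref{lem_prob_bound} gives a lower bound on the probability that every iterate of every trajectory lies in the ``inactive'' set $\InAct$, and \cref{lem_inactive_large_error} ensures that on this event the true risk $\trueLoss{}$ is bounded below by the constant-approximation error $\inf_{b \in \R} \E[|b-\Bayes|]$.

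More concretely, I would introduce the event
\begin{equation*}
E = \{\Pel \in \Psp \colon \forall\, n \in \{1,\dots,N\},\, t \in \N_0 \colon \SGDstep{n}{t}{}(\Pel) \in \InAct\}.
\end{equation*}
Since $\SGDmin{}$ takes values in $\{1,\dots,N\} \times \N_0$, for every $\Pel \in E$ we have $\SGDvstep{\SGDmin{}(\Pel)}{}(\Pel) \in \InAct$, so \cref{lem_inactive_large_error} yields $\trueLoss{}(\SGDvstep{\SGDmin{}(\Pel)}{}(\Pel)) \geq \inf_{b \in \R} \E[|b-\Bayes|]$ on $E$. Taking the minimum with $1$ and integrating gives
\begin{equation*}
\finalLoss = \E[\min\{\trueLoss{}(\SGDvstep{\SGDmin{}}{}),1\}] \geq \Pm(E) \min\!\Big\{\inf_{b \in \R} \E[|b-\Bayes|],1\Big\}.
\end{equation*}
Plugging in the two lower bounds for $\Pm(E)$ supplied by \cref{lem_prob_bound} then yields the two displayed inequalities in \eqref{prop_quant_disp}.

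There is no real obstacle; the only subtle point is the logical quantifier structure, namely that the lower bound on $\trueLoss{}(\SGDvstep{\SGDmin{}}{})$ must hold pointwise on $E$ regardless of which index pair $\SGDmin{}$ selects. This is where the ``for all $n$ and $t$'' in the definition of $E$ (and in \cref{lem_prob_bound}) is essential: it allows one to dominate $\trueLoss{}(\SGDvstep{\SGDmin{}}{})$ uniformly by the constant approximation error on $E$, so no measurability or coupling issue between $\SGDmin{}$ and the trajectories needs to be addressed.
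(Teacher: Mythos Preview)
Your proposal is correct and follows essentially the same approach as the paper: the paper also bounds $\finalLoss$ from below by $\Pm(\SGDvstep{\SGDmin{}}{} \in \InAct)\,\min\{\inf_{b \in \R}\E[|b-\Bayes|],1\}$ via \cref{lem_inactive_large_error} (phrased through Markov's inequality), then uses $\Pm(\SGDvstep{\SGDmin{}}{} \in \InAct) \geq \Pm(E)$ together with \cref{lem_prob_bound}. Your version simply skips the intermediate event $\{\SGDvstep{\SGDmin{}}{} \in \InAct\}$ and bounds the integrand directly on $E$, which is slightly more direct but amounts to the same argument.
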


\begin{proof}
Denote $C = \min\{\inf_{b \in \R} \E[|b-\Bayes|],1\}$ and observe that 
\cref{lem_inactive_large_error} implies for all $\Pel \in \Psp$ with $\SGDvstep{\SGDmin{}(\Pel)}{}(\Pel) \in \InAct$ that $\min\{\trueLoss{}(\SGDvstep{\SGDmin{}(\Pel)}{}(\Pel)),1\} \geq C$.
Markov's inequality hence ensures that
\begin{equation*}
	C \, \Pm( \SGDvstep{\SGDmin{}}{} \in \InAct ) \leq C \, \Pm(\min\{\trueLoss{}(\SGDvstep{\SGDmin{}}{}),1\} \geq C) \leq \finalLoss.
\end{equation*}%
Combining this with \cref{lem_prob_bound} and the fact that $\Pm( \SGDvstep{\SGDmin{}}{} \in \InAct ) \geq \Pm(\forall\, n \in \{1,\dots,N\}, \, t \in \N_0 \colon \SGDstep{n}{t}{} \in \InAct)$ establishes (\ref{prop_quant_disp}).
\end{proof}

Let us briefly discuss how the inequality in \cref{prop_quant} relates to prior work in the literature.
Fix a depth $\depth \in \N$ and consider the problem of distributing a given number of neurons among the $\depth-1$ hidden layers. In order to minimize the chance of starting with an inactive network, one needs to minimize the quantity $1 - \sprod{j=2}{\depth-1} ( 1 - \sprod{i=1+k_{j-1}}{k_j} \Pm( \SGDstep{1}{0}{i} < 0 ) )$ from \eqref{prop_quant_disp}.
Under the assumption that $\Pm( \SGDstep{1}{0}{i} < 0 )$ does not depend on $i$, this can be achieved by choosing 
the same number of neurons in each layer.

The effects of initialization and architecture on early training have also been studied in \cite{Hanin2018,HanRol2018}.
While \cite{Hanin2018} investigates the problem of vanishing and exploding gradients, \cite{HanRol2018} studies two failure modes associated with poor starting conditions. Both find that, given a total number of neurons to spend, distributing 
them evenly among the hidden layers, yields the best results. This is in line with our findings.


\section{Main results}
\label{section_qual}

In this section, we prove the paper's main results, \cref{thrm_main_quant} and \cref{cor_main_qual}.
While \cref{thrm_main_quant} provides precise quantitative conditions under which SGD does not converge in the training of DNNs, \cref{cor_main_qual} is a qualitative result. To prove them, we need the following elementary result.
Throughout, $\log$ denotes the natural logarithm.

\begin{lemma}
\label{lem_kappa_bound}
Let $\depth,N,\maxwidth \in (0,\infty)$ and $\kappa,p \in (0,1)$ be such that
$\depth \geq |\!\log(p)|\maxwidth p^{-\maxwidth}$ and $N \leq |\!\log(\kappa)|(1-p^{\maxwidth})^{1-\depth}$.
Then $[1 - (1-p^{\maxwidth})^{\depth}]^N \geq \kappa$.
\end{lemma}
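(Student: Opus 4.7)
The plan is to reduce everything to the single auxiliary variable $q = p^{\maxwidth} \in (0,1)$, so that the hypotheses become $\depth \ge |\!\log(q)|/q$ (since $|\!\log(p)|\maxwidth = |\!\log(p^{\maxwidth})| = |\!\log q|$ and $p^{-\maxwidth}=1/q$) and $N \le |\!\log(\kappa)|(1-q)^{1-\depth}$, while the target inequality becomes $[1-(1-q)^{\depth}]^N \ge \kappa$. Taking logs, this is equivalent to showing
\begin{equation*}
N\,\bigl(-\log[1-(1-q)^{\depth}]\bigr) \le |\!\log \kappa|.
\end{equation*}

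First I would apply the elementary inequality $-\log(1-y) \le y/(1-y)$ for $y \in (0,1)$ with $y=(1-q)^{\depth}$. This bounds the left-hand side by $N(1-q)^{\depth}/[1-(1-q)^{\depth}]$. Substituting the hypothesis $N \le |\!\log(\kappa)|(1-q)^{1-\depth}$, the $(1-q)^{\depth}$ factors collapse and I am left needing
\begin{equation*}
\frac{(1-q)}{1-(1-q)^{\depth}} \le 1, \qquad\text{i.e.,}\qquad (1-q)^{\depth} \le q.
\end{equation*}

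The main (and only nontrivial) obstacle is verifying this last inequality from the depth hypothesis. Taking logs, $(1-q)^{\depth} \le q$ is equivalent to $\depth\,|\!\log(1-q)| \ge |\!\log q|$. Since $|\!\log(1-q)| \ge q$ for $q\in(0,1)$ (by integrating $1/(1-s) \ge 1$ from $0$ to $q$, or equivalently from $1-q \le e^{-q}$), it suffices to have $\depth \, q \ge |\!\log q|$, which is exactly the hypothesis $\depth \ge |\!\log q|/q$ obtained by rewriting $\depth \ge |\!\log(p)|\maxwidth p^{-\maxwidth}$.

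Putting the three steps together — the $-\log(1-y) \le y/(1-y)$ bound, the substitution of the hypothesis on $N$, and the comparison $(1-q)^{\depth}\le q$ coming from the hypothesis on $\depth$ — yields $[1-(1-q)^{\depth}]^N \ge \kappa$, which is the claim. The entire argument is a short calculus exercise once the variable substitution $q = p^{\maxwidth}$ is in place; no probabilistic or neural-network content is needed.
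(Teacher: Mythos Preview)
Your proof is correct and follows essentially the same approach as the paper's. Both arguments reduce to showing $N\,|\!\log(1-(1-q)^{\depth})| \le |\!\log\kappa|$ with $q=p^{\maxwidth}$, use the inequality $|\!\log(1-q)| \ge q$ together with the depth hypothesis to obtain $(1-q)^{\depth} \le q$, and then bound $|\!\log(1-y)|$ from above to finish; the only cosmetic difference is that you use the universal bound $-\log(1-y)\le y/(1-y)$, whereas the paper uses the slightly sharper $-\log(1-y) < y/(1-q)$ valid for $y\in(0,q)$, which lets it substitute the hypothesis on $N$ directly without a second appeal to $(1-q)^{\depth}\le q$.
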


\begin{proof}
Let the functions $f \colon [0,1) \rightarrow \R$ and $g \colon [0,1) \rightarrow \R$ be given by $f(x) = x+\log(1-x)$ and $g(x) = (1-p^{\maxwidth})^{-1}x+\log(1-x)$. Since $f(0) = 0$ and $f'(x) = 1-(1-x)^{-1} < 0$ for all $x \in (0,1)$, one has
$|\!\log(1-x)|^{-1} < x^{-1}$ for all $x \in (0,1)$.
Hence, $\depth > |\!\log(p)| \maxwidth |\!\log(1-p^{\maxwidth})|^{-1}$, from which it follows that 
$(1-p^{\maxwidth})^\depth < p^{\maxwidth}$. In addition, $g(0) = 0$ and
$g'(x) = (1-p^{\maxwidth})^{-1} - (1-x)^{-1} > 0$ for all $x \in (0,p^{\maxwidth})$, which implies that
$|\!\log(1-x)| < (1-p^{\maxwidth})^{-1}x$ for all $x \in (0,p^{\maxwidth})$. Hence, we deduce from 
$(1-p^{\maxwidth})^{\depth} < p^{\maxwidth}$ that 
$N |\!\log(1-(1-p^{\maxwidth})^{\depth})| < N (1-p^{\maxwidth})^{\depth-1} \leq |\!\log(\kappa)|$, and 
taking the exponential yields the desired statement.
\end{proof}

We proved \cref{prop_quant} in the abstract framework of \cref{SGD_setup}.
For the sake of concreteness, we now return to the setup of the introduction.
We quickly recall it below.

\begin{setting}
\label{SGD_setup_general}
	Let $u,\mathfrak{u} \in \R$, $v \in (u,\infty)$, $\mathfrak{v} \in (\mathfrak{u},\infty)$, $\clip \in C(\R,\R)$, $d \in \N$, and $(\SGDstepsize_t)_{t \in \N} \subseteq \R$.
Consider functions $\dataX{n}{t}{j} \colon \Psp \rightarrow [u,v]^d$ and $\dataY{n}{t}{j} \colon \Psp \rightarrow [\mathfrak{u},\mathfrak{v}]$, $j,n,t \in \N_0$, on a probability space $(\Psp,\Psa,\Pm)$ such that $\datX$ and $\datY$ are random variables.
Let $\expB \colon [u,v]^d \rightarrow [\mathfrak{u},\mathfrak{v}]$ be a measurable function such that $\Pm$-a.s.\ $\expB(\datX) = \E[\datY | \datX]$. Let
$\Loss{n}{t}{\arch,m} \colon \R^{\NNparam} \times \Psp \rightarrow \R$, $m \in \N$, $n,t \in \N_0$, $\arch \in \NNset$, be given by
\begin{equation}
\label{SGD_setup_gen_emp_loss}
\Loss{n}{t}{\arch,m}(\NNel) = \frac{1}{m} \ssum{j=1}{m} \big| (\clip \circ \NNfct{\arch}{\NNel})(\dataX{n}{t}{j}) - \dataY{n}{t}{j} \big|^2,
\end{equation}%
and assume $\SGDgrad{n}{t}{\arch,m} = (\SGDgrad{n}{t}{\arch,m,1},\dots,\SGDgrad{n}{t}{\arch,m,\NNparam}) \colon \R^{\NNparam} \times \Psp \rightarrow \R^{\NNparam}$ are mappings satisfying
\begin{equation*}
	\SGDgrad{n}{t}{\arch,m,i}(\NNel,\Pel) = \frac{\partial}{\partial \NNel_i} \Loss{n}{t}{\arch,m}(\NNel,\Pel)
\end{equation*}%
for all $m,n,t \in \N$, $\arch \in \NNset$, $i \in \{1,\dots,\NNparam\}$, $\Pel \in \Psp$, and
\begin{equation*}
	\NNel = (\NNel_1,\dots,\NNel_{\NNparam}) \in \left\{ \vartheta = (\vartheta_1,\dots,\vartheta_{\NNparam}) \in \R^{\NNparam} \colon
	\!\begin{array}{cc}
		\Loss{n}{t}{\arch,m}(\vartheta_1,\dots,\vartheta_{i-1},(\cdot), \vartheta_{i+1},\dots,\vartheta_{\NNparam},\Pel) \\
		\text{ as a function } \R \rightarrow \R \text{ is differentiable at } \vartheta_i.
	\end{array} \right\}.
\end{equation*}%
Let $\SGDstep{n}{t}{\arch,m} = (\SGDstepM{n}{t}{\arch,m}{1},\dots,\SGDstepM{n}{t}{\arch,m}{\NNparam}) \colon \Psp \rightarrow \R^{\NNparam}$, $m,n \in \N$, $t \in \N_0$, $\arch \in \NNset$, be random variables such that $\SGDstep{n}{0}{\arch,m}$, $n \in \N$, are i.i.d., $\SGDstepM{1}{0}{\arch,m}{1},\dots,\SGDstepM{1}{0}{\arch,m}{\NNparam}$ are independent for all $m \in \N$, $\arch \in \NNset$, and
\begin{equation*}
	\SGDstep{n}{t}{\arch,m} = \SGDstep{n}{t-1}{\arch,m} - \SGDstepsize_t \SGDgrad{n}{t}{\arch,m}(\SGDstep{n}{t-1}{\arch,m})
\end{equation*}%
for all $m,n,t \in \N$, $\arch \in \NNset$. Let $\SGDmin{\arch,M,N,T} \colon \Psp \rightarrow \{1,\dots,N\} \times \N_0$, $M,N \in \N$, $T \in \N_0$, $\arch \in \NNset$, be random variables.
\end{setting}

The following is the main result of this article.

\begin{theorem}
\label{thrm_main_quant}
Assume \cref{SGD_setup_general} and fix $M \in \N$. Consider sequences
$(\depth_l,N_l,\maxwidth_l)_{l \in \N_0} \subseteq \N^3$, 
$(\arch^l)_{l \in \N_0} = (\arch_0^l,\dots,\arch_{\depth_l}^l)_{l \in \N_0} \subseteq \NNset$ and
constants $\kappa,p \in (0,1)$ such that, for all $l \in \N_0$,
$\maxwidth_l = \max\{\arch_1^l,\dots,\arch_{\depth_l-1}^l\}$, 
$\depth_l \geq |\!\log(p)| \maxwidth_l (\maxwidth_l+1) p^{-\maxwidth_l(\maxwidth_l+1)} + 2$,
and $N_l \leq |\!\log(\kappa)|(1-p^{\maxwidth_l(\maxwidth_l+1)})^{3-\depth_l}$. Let $\Phi_{l,T} \colon \Psp \rightarrow \R^{\NNparaM{l}}$, $l,T \in \N_0$, be given by
$\Phi_{l,T} = \SGDvstep{\SGDmin{\arch^l,M,N_l,T}}{\arch^l,M}$,
and assume that $\inf_{l \in \N_0} \inf_{i \in \{1,\dots,\NNparaM{l}\}} \Pm \big( \SGDstepM{1}{0}{\arch^l,M}{i} < 0 \big) \geq p$.
Then
\begin{equation}
\label{thrm_main_quant_conclusion}
\liminf_{l \rightarrow \infty} \inf_{T \in \N_0} \E\bigg[\min\!\bigg\{ \int_{[u,v]^d} \abs{ \brak{ \clip \circ \NNfct{\arch^l}{\Phi_{l,T}}} (x) - \expB(x) }\, \Pm_{\datX}(dx) , 1 \bigg\}\bigg] \geq \kappa \min\!\Big\{\inf_{b \in \R} \E\big[|b-\expB(\datX)|\big],1\Big\}.
\end{equation}%
\end{theorem}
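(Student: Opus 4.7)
The strategy is to cast \cref{SGD_setup_general} as an instance of the abstract framework of \cref{SGD_setup} for each pair $(l,T)$ and then chain \cref{prop_quant} with \cref{lem_kappa_bound}. Concretely, for fixed $l$ and $T$, I would instantiate \cref{SGD_setup} with $\arch = \arch^l$, $N = N_l$, $\mathcal{X} = \datX$, and $\Bayes = \expB(\datX)$, identifying $\SGDstep{n}{t}{}$ and $\SGDmin{}$ there with $\SGDstep{n}{t}{\arch^l,M}$ and $\SGDmin{\arch^l,M,N_l,T}$ here. Under this identification the abstract true risk becomes $\trueLoss{}(\NNel) = \int_{[u,v]^d}\abs{(\clip\circ\NNfct{\arch^l}{\NNel})(x) - \expB(x)}\,\Pm_{\datX}(dx)$, so that $\finalLoss = \E[\min\{\trueLoss{}(\Phi_{l,T}),1\}]$ is exactly the inner expectation appearing in \eqref{thrm_main_quant_conclusion}.

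Before invoking \cref{prop_quant} I need $\depth_l \geq 3$, which follows from the hypothesis $\depth_l \geq |\!\log(p)|\maxwidth_l(\maxwidth_l+1)p^{-\maxwidth_l(\maxwidth_l+1)} + 2$ together with $|\!\log(p)|\maxwidth_l(\maxwidth_l+1)p^{-\maxwidth_l(\maxwidth_l+1)} > 0$ (since $p \in (0,1)$ and $\maxwidth_l \geq 1$) and $\depth_l \in \N$. Applying \cref{prop_quant}, combined with the assumption $\Pm(\SGDstepM{1}{0}{\arch^l,M}{i}<0) \geq p$ and the elementary observation that $q \mapsto [1-(1-q^{\maxwidth_l(\maxwidth_l+1)})^{\depth_l-2}]^{N_l}$ is non-decreasing on $(0,1)$, yields
\[
\finalLoss \geq \big[1-(1-p^{\maxwidth_l(\maxwidth_l+1)})^{\depth_l-2}\big]^{N_l} \min\!\Big\{\inf_{b\in\R}\E[\abs{b-\expB(\datX)}],1\Big\}.
\]

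It remains to bound the bracketed factor from below by $\kappa$. I plan to apply \cref{lem_kappa_bound} with the substitutions $W \leftarrow \maxwidth_l(\maxwidth_l+1)$, $D \leftarrow \depth_l-2$, $N \leftarrow N_l$. Its two hypotheses then become $\depth_l - 2 \geq |\!\log(p)|\maxwidth_l(\maxwidth_l+1)p^{-\maxwidth_l(\maxwidth_l+1)}$ and $N_l \leq |\!\log(\kappa)|(1-p^{\maxwidth_l(\maxwidth_l+1)})^{3-\depth_l}$, both of which hold by assumption, and its conclusion is exactly $[1-(1-p^{\maxwidth_l(\maxwidth_l+1)})^{\depth_l-2}]^{N_l} \geq \kappa$. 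Combining with the previous display produces a lower bound $\kappa \min\{\inf_b \E[\abs{b-\expB(\datX)}],1\}$ that is uniform in $T$ and $l$, so taking $\inf_T$ and then $\liminf_l$ preserves it and establishes \eqref{thrm_main_quant_conclusion}. The argument is therefore essentially a short chain of substitutions; the only care required is in aligning the exponents of \cref{lem_kappa_bound} with the $\maxwidth_l(\maxwidth_l+1)$ and $\depth_l-2$ produced by \cref{prop_quant} and using monotonicity in $p$ to replace the architecture-dependent infimum by the uniform lower bound $p$.
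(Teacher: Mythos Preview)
Your proposal is correct and follows essentially the same route as the paper: instantiate \cref{SGD_setup} for each $(l,T)$, apply \cref{prop_quant} to obtain the factor $[1-(1-p^{\maxwidth_l(\maxwidth_l+1)})^{\depth_l-2}]^{N_l}$ (using monotonicity in the probability parameter to pass from the architecture-dependent infimum to the uniform lower bound $p$), and then invoke \cref{lem_kappa_bound} with the substitutions $W\leftarrow \maxwidth_l(\maxwidth_l+1)$, $D\leftarrow \depth_l-2$, $N\leftarrow N_l$ to bound that factor below by $\kappa$. Your added checks that $\depth_l\geq 3$ and that the hypotheses of \cref{lem_kappa_bound} align with those of the theorem are exactly the bookkeeping the paper leaves implicit.
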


\begin{proof}
	Denote $q = \inf_{l \in \N_0} \inf_{i \in \{1,\dots,\NNparaM{l}\}} \Pm \big( \SGDstepM{1}{0}{\arch^l,M}{i} < 0 \big)$.
By \cref{prop_quant}, one has, for all $l,T \in \N_0$,
\begin{equation*}
\begin{split}
&\E \bigg[\min \! \bigg\{ \int_{[u,v]^d} \abs{ \brak{ \clip \circ \NNfct{\arch^l}{\Phi_{l,T}}} (x) - \expB(x) }\, \Pm_{\datX}(dx) , 1 \bigg\}\bigg] \\
&\geq \big[1-(1-q^{\maxwidth_l(\maxwidth_l+1)})^{\depth_l-2}\big]^{N_l} \min\!\Big\{\inf_{b \in \R} \E\big[|b-\expB(\datX)|\big],1\Big\}.
\end{split}
\end{equation*}%
Moreover, \cref{lem_kappa_bound} implies that, for all $l \in \N_0$,
\begin{equation*}
	\big[1-(1-q^{\maxwidth_l(\maxwidth_l+1)})^{\depth_l-2}\big]^{N_l} \geq \big[1-(1-p^{\maxwidth_l(\maxwidth_l+1)})^{\depth_l-2}\big]^{N_l} \geq \kappa,
\end{equation*}%
which completes the proof of \cref{thrm_main_quant}.
\end{proof}

Instead of focusing on a single sequence of architectures as in \cref{thrm_main_quant}, one can instead consider the limit superior over all possible architectures, which we do in \cref{cor_main_qual} below.
Note that this allows us to increase the constant $\kappa$ from \eqref{thrm_main_quant_conclusion} to 1.

\begin{corollary}
\label{cor_main_qual}
Assume \cref{SGD_setup_general} and let $c \in (0,\infty)$. Suppose that
$\mathrm{Var}(\expB(\datX)) > 0$ and assume that $\SGDstep{n}{0}{\arch,m}$ is uniformly distributed on $[-c,c]^{\NNparam}$
for all $m,n \in \N$, $\arch \in \NNset$. Then
\begin{equation}
\label{cor_main_qual_conclusion}
\begin{split}
\inf_{N \in \N} \limsup_{\substack{\arch = (\arch_0,\dots,\arch_{\depth}) \in \NNset \\ \min\{\depth,\arch_1,\dots,\arch_{\depth-1}\} \rightarrow \infty}} \inf_{\substack{M \in \N \\ T \in \N_0}} \E\bigg[\min\!\bigg\{ \int_{[u,v]^d} \abs{\brak{ \clip \circ \NNfct{\arch}{\SGDvstep{\SGDmin{\arch,M,N,T}}{\arch,M}} } (x) - \expB(x)}\, \Pm_{\datX}(dx) , 1 \bigg\}\bigg] \\
\geq \min\!\Big\{\inf_{b \in \R} \E\big[|b-\expB(\datX)|\big],1\Big\} > 0.
\end{split}
\end{equation}%
\end{corollary}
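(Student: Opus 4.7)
The plan is to apply \cref{thrm_main_quant} along a carefully chosen sequence of architectures and then let an auxiliary constant $\kappa$ increase to $1$. A first observation is that, since $\SGDstep{n}{0}{\arch,m}$ is uniformly distributed on $[-c,c]^{\NNparam}$ with independent coordinates, one has $\Pm(\SGDstepM{1}{0}{\arch,m}{i} < 0) = 1/2$ for every admissible triple $(\arch, m, i)$. Hence the probability hypothesis of \cref{thrm_main_quant} is satisfied with $p = 1/2$ uniformly across all architectures. A second, crucial observation is that the lower bound in \cref{prop_quant} depends only on $N$ and the architecture; it is entirely independent of the batch size $M$ and the number of gradient steps $T$, because \cref{lem_SGD_stays_inactive} traps inactive initializations regardless of the data and iteration count.

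Fix $N \in \N$ and $\kappa \in (0,1)$, and set $\maxwidth_l = l$ for $l \in \N$. Since $1 - (1/2)^{l(l+1)} \in (0,1)$ and $3 - \depth \to -\infty$ as $\depth \to \infty$, the quantity $(1-(1/2)^{l(l+1)})^{3-\depth}$ diverges to $+\infty$, so one can pick $\depth_l \in \N$ so large that both
\[
\depth_l \geq |\!\log(1/2)|\, l(l+1)\, 2^{l(l+1)} + 2 \qquad \text{and} \qquad N \leq |\!\log(\kappa)|\bigl(1-(1/2)^{l(l+1)}\bigr)^{3-\depth_l}
\]
hold. Take $\arch^l = (d, l, l, \dots, l, 1) \in \NNset$ of depth $\depth_l$ with every hidden layer of width $l$; then $\maxwidth_l = l$, and $\min\{\depth_l, \arch_1^l, \dots, \arch_{\depth_l-1}^l\} = \min\{\depth_l, l\} \to \infty$ as $l \to \infty$.

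Combining \cref{prop_quant} with \cref{lem_kappa_bound} under these choices, and using the $(M,T)$-independence of the bound, yields for every $l$
\[
\inf_{M \in \N,\, T \in \N_0} \E\!\bigg[\min\!\bigg\{ \int_{[u,v]^d} \abs{\brak{\clip \circ \NNfct{\arch^l}{\SGDvstep{\SGDmin{\arch^l,M,N,T}}{\arch^l,M}}}(x) - \expB(x)}\,\Pm_{\datX}(dx), 1 \bigg\}\bigg] \geq \kappa \min\!\Big\{\inf_{b \in \R} \E[|b-\expB(\datX)|], 1\Big\}.
\]
Since $\min\{\depth_l, \arch_1^l, \dots, \arch_{\depth_l-1}^l\} \to \infty$, the $\limsup$ over all architectures with diverging minimal dimension dominates the $\limsup$ along $(\arch^l)$, so the same bound persists for $\limsup_{\arch} \inf_{M,T} \E[\ldots]$. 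The right-hand side is independent of $N$, so taking $\inf_N$ preserves the bound, and letting $\kappa \nearrow 1$ gives the first inequality in \eqref{cor_main_qual_conclusion}.

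For the strict positivity it remains to show $\inf_{b \in \R} \E[|b - \expB(\datX)|] > 0$. The map $b \mapsto \E[|b - \expB(\datX)|]$ is convex, continuous and coercive on $\R$, so it attains its infimum at some $b^* \in \R$; if this infimum vanished, then $\expB(\datX) = b^*$ almost surely, contradicting $\mathrm{Var}(\expB(\datX)) > 0$. I anticipate no real obstacle: the only delicate step is selecting $\depth_l$ large enough that both growth constraints of \cref{thrm_main_quant} are satisfied simultaneously while still ensuring $\min_i \arch_i^l \to \infty$, which the uniform-width architecture above handles transparently.
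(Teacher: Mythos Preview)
Your proposal is correct and follows essentially the same route as the paper: observe that the uniform initialization gives $p=1/2$ uniformly in $(\arch,m,i)$, construct for each fixed $N$ and $\kappa\in(0,1)$ a sequence of architectures with growing minimal width and depth large enough to satisfy the constraints of \cref{thrm_main_quant} (equivalently, \cref{prop_quant} plus \cref{lem_kappa_bound}), then let $\kappa\uparrow 1$ and invoke $\mathrm{Var}(\expB(\datX))>0$ for strict positivity. The only cosmetic difference is that you make the architecture $\arch^l=(d,l,\dots,l,1)$ and the uniformity in $M$ explicit, whereas the paper leaves both implicit in the phrase ``there exist $\depth$ and $\arch$ \dots\ for all $M\in\N$''.
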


\begin{proof}
	First note that $\inf_{M \in \N} \inf_{\arch \in \NNset} \inf_{i \in \{1,\dots,\NNparam\}} \Pm(\SGDstepM{1}{0}{\arch,M}{i} < 0) = \frac{1}{2}$.
So, it follows from \cref{thrm_main_quant} that for all $k,N \in \N$, $\kappa \in (0,1)$ there exist $\depth \in \N$ and $\arch = (\arch_0,\dots,\arch_{\depth}) \in \NNset$ such that $\min\{\depth,\arch_1,\dots,\arch_{\depth-1}\} \geq k$ and
\begin{equation*}
\label{cor_main_qual_PF}
	\inf_{T \in \N_0} \E\bigg[\min\!\bigg\{ \int_{[u,v]^d} \big| \big( \clip \circ \NNfct{\arch}{\SGDvstep{\SGDmin{\arch,M,N,T}}{\arch,M}} \big) (x) - \expB(x) \big|\, \Pm_{\datX}(dx) , 1 \bigg\}\bigg] \geq \kappa \min\!\Big\{\inf_{b \in \R} \E\big[|b-\expB(\datX)|\big],1\Big\}
\end{equation*}%
for all $M \in \N$. As a result, one has
\begin{equation*}
\begin{split}
	\inf_{N \in \N} \limsup_{\substack{\arch = (\arch_0,\dots,\arch_{\depth}) \in \NNset \\ \min\{\depth,\arch_1,\dots,\arch_{\depth-1}\} \rightarrow \infty}} \inf_{\substack{M \in \N \\ T \in \N_0}} \E\bigg[\min\! \bigg\{ \int_{[u,v]^d} \abs{ \brak{ \clip \circ \NNfct{\arch}{\SGDvstep{\SGDmin{\arch,M,N,T}}{\arch,M}} } (x) - \expB(x) }\, \Pm_{\datX}(dx) , 1 \bigg\}\bigg] \\
	\geq \kappa \min\!\Big\{\inf_{b \in \R} \E\big[|b-\expB(\datX)|\big],1\Big\}
\end{split}
\end{equation*}%
for all $\kappa \in (0,1)$. Taking the limit $\kappa \uparrow 1$ and noting that the assumption $\mathrm{Var}(\expB(\datX)) > 0$ implies $\inf_{b \in \R} \E\big[|b-\expB(\datX)|\big] > 0$ completes the proof of the corollary.
\end{proof}


\vskip 5mm\noindent{\large\sc Acknowledgments}\vskip 2mm
\noindent This work has partially been supported by Swiss National Science Foundation Research Grant 175699.
The second author acknowledges funding by the Deutsche Forschungsgemeinschaft (DFG, German Research Foundation) under Germany's Excellence Strategy EXC 2044-390685587, Mathematics Muenster: Dynamics-Geometry-Structure.

\phantomsection%
\addcontentsline{toc}{section}{Bibliography}%
\bibliographystyle{acm}%
\bibliography{}

\end{document}